\documentclass[letterpaper]{article}
\PassOptionsToPackage{hidelinks}{hyperref}  % ← add this line
\usepackage{flairs}
\usepackage{times}
\usepackage{soul}
\usepackage{url}
\usepackage[utf8]{inputenc}
\usepackage[small]{caption}
\usepackage{graphicx}
\usepackage{amsmath}
\usepackage{amsthm}
\usepackage{xcolor}
\theoremstyle{plain}
\newtheorem{theorem}{Theorem}
\newtheorem{lemma}{Lemma}

\theoremstyle{definition}
\newtheorem{definition}{Definition}
\newtheorem{assumption}{Assumption}

\usepackage{amsthm}
\usepackage{amssymb}
\usepackage{booktabs}
\usepackage{algorithm}
\usepackage{algorithmic}
\usepackage{subcaption}
\usepackage{tikz}
\usetikzlibrary{arrows.meta, positioning, fit, calc}

\begin{document}
% This file is an adoption of the style file for AAAI Press 
% proceedings, working notes, and technical reports.  This file is made 
% with minimal changes by explicit permission from AAAI.
\title{A Scalable Approach to Solving Simulation-Based Network Security Games}
\author{Michael Lanier and Yevgeniy Vorobeychik}

\maketitle
\begin{abstract}
We introduce \textbf{MetaDOAR}, a lightweight meta-controller that augments the Double Oracle / PSRO paradigm with a learned, partition-aware filtering layer and Q-value caching to enable scalable multi-agent reinforcement learning on very large cyber-network environments. MetaDOAR learns a compact state projection from per-node structural embeddings to rapidly score and select a small subset of devices (a top-$k$ partition) on which a conventional low-level actor performs focused beam search utilizing a critic agent. Selected candidate actions are evaluated with batched critic forwards and stored in an LRU cache keyed by a quantized state projection and local action identifiers, dramatically reducing redundant critic computation while preserving decision quality via conservative $k$-hop cache invalidation. Empirically, MetaDOAR attains higher player payoffs than SOTA baselines on large network topologies, without significant scaling issues in terms of memory usage or training time. This contribution provide a practical, theoretically motivated path to efficient hierarchical policy learning for large-scale networked decision problems. 
\end{abstract}
\section{Introduction}

Cybersecurity remains a fundamentally adversarial domain in which defenders must continually adapt to intelligent and resourceful attackers. Modeling this interaction requires balancing realism, scalability, and analytical rigor. While reinforcement learning (RL) and multi-agent learning methods have shown promise for sequential cyber-defense planning, their application to realistic network environments remains limited by the scale and structure of the underlying state and action spaces. Networks with thousands of interconnected devices, heterogeneous services, and partial observability pose severe computational challenges for learning algorithms that assume centralized control or dense equilibrium computation.

Game-theoretic approaches such as Double Oracle (DO) and its reinforcement learning extension, Policy-Space Response Oracles (PSRO), provide a principled mechanism for approximating equilibria in large multi-agent games. However, these methods rely on repeated simulation of the environment and evaluation of value functions for every strategy pair, causing their cost to grow super-linearly with network size. In cyber defense settings, each evaluation can involve complex topological dependencies and asynchronous device dynamics, making naive DO or PSRO intractable for large-scale environments.

Recent work has therefore shifted toward hierarchical or structured formulations that exploit the compositional nature of cyber networks. Recently, a software solution, CyGym,  was introduced as a scalable, simulation-based game-theoretic environment for cyber operations, coupling realistic network dynamics with partially observable stochastic-game semantics. CyGym enabled Double Oracle and reinforcement learning methods to be applied at moderate scale, but existing methods of solving in this simulator have remained bottlenecked by exhaustive exploration of device-level action combinations.

In this paper, we introduce MetaDOAR, a meta-controller designed to overcome these computational barriers. MetaDOAR augments the DO/PSRO paradigm with a learned, partition-aware filtering mechanism and cached critic reasoning. It learns compact state projections from per-node structural embeddings that allow it to identify a small subset of devices—the top-$k$ partition—where meaningful strategic decisions are concentrated. A low-level actor then performs focused beam search within this subset, guided by a centralized critic. To further reduce redundant computation, critic evaluations are stored in an LRU cache keyed by a quantized state projection and local action identifiers, with conservative $k$-hop invalidation to ensure correctness.

%We also extend the CyGym simulator to support efficient large-scale experimentation, including sparse network representations, snapshot-based resets, and standardized DO-compatible interfaces. These additions allow MetaDOAR to be evaluated rigorously under consistent conditions and make CyGym a more general platform for scalable multi-agent reinforcement learning research.

Empirical results demonstrate that MetaDOAR achieves higher player payoffs than state-of-the-art baselines on large CyGym topologies, while not exploding wall-clock training time or memory usage.\footnote{Code available at tobereleased.com} Together, these contributions advance the frontier of practical, theoretically grounded methods for learning adaptive, hierarchical policies in massive networked decision environments.

\section{Related Work}

\noindent\textbf{Game-Theoretic Security and Double Oracle Methods.}
Security games provide a standard framework for modeling strategic interaction between attackers and defenders, with applications to infrastructure protection, patrolling, and cyber defense~\cite{tambe2011security,korzhyk2011stackelberg}. 
Double oracle (DO) algorithms~\cite{mcmahan2003doublerobust,jain2011double} iteratively expand a restricted game via best responses and have been widely used to compute equilibria without enumerating the full strategy space.
Recent parametric and RL-based best-response methods adapt DO-style procedures to large or continuous strategy spaces in security and multi-agent domains~\cite{heinrich2015fsm}. ~\cite{lanier} instantiates a strong continuous-control double-oracle policy for realistic cyber ranges; ~\cite{sarkar2024geospatial} provides a heuristic-guided structure-aware search in the visual active search domain.
MetaDOAR is designed explicitly to \emph{build on} DOAR-style best responses while restructuring how candidate actions are selected and evaluated at scale. It keeps the original DOAR actor--critic and training loop intact, but places a lightweight selection and caching layer around them, enabling more tractable equilibrium-style reasoning on very large networks.

\smallskip
\noindent\textbf{RL for Cyber Defense and Networked Environments.}
Prior work has explored reinforcement learning for intrusion response, moving target defense, and autonomous penetration testing using Markov games or attack graphs~\cite{10.5555/2832249.2832322,schlenker2018deceiving,nguyen2021deep}. 
However, many of these methods are evaluated on relatively small networks or rely on action abstractions that avoid the full device-level combinatorics.
Recent cyber-range simulators and benchmarks expose much richer action spaces, but best-response and MARL baselines often assume full access to all $M$ devices at each step, leading to prohibitive latency and memory usage as $M$ grows.
MetaDOAR targets exactly this failure mode: it treats a strong DOAR actor--critic as a fixed inner solver and introduces a meta-controller that constrains the admissible device set without relaxing the underlying security model.

\smallskip
\noindent\textbf{Large Action Spaces and Efficient Structured Inference.}
Scaling RL to very large or combinatorial action spaces has motivated approaches based on action embeddings, factorized policies, ranking models, and attention over entities~\cite{dulacarnold2015largediscrete,tavakoli2018actionembeddings,wen2024reinforcinglanguageagentspolicy}.
Entity and graph-based RL methods exploit relational structure to focus computation on relevant objects or subgraphs~\cite{zambaldi2019relational}.
These techniques, however, typically co-design the policy and selection mechanism end-to-end, and are not constrained to preserve exact best-response behavior with respect to an existing critic.
MetaDOAR instead (i) builds compact per-device structural embeddings from IDs, degrees, and role-specific flags; (ii) uses a simple compatibility function with a global state embedding to select a small top-$k$ subset of devices; and (iii) employs an LRU Q-value cache keyed by projected state and device to avoid recomputing critic evaluations for unchanged regions.
The result is a best-response architecture tailored for large-scale cyber defense, rather than a generic large-action RL method.
Work on large–action reinforcement learning, for example \cite{dulacarnold2015largediscrete}, represents discrete actions with continuous embeddings and then uses a distance metric in that space to pick a small candidate set via $k$-nearest neighbours. MetaDOAR takes a different route: it learns a state-dependent scoring function over entity-indexed actions (devices or device subsets) and simply keeps the top-$K$ by score. In other words, MetaDOAR does not rely on any metric structure over the action space; it only needs a learnable ranking of actions given the current state, which is a more natural fit in structured domains such as cyber defence.

\smallskip
\noindent\textbf{MARL and Hierarchical Expert Policies for Cyber Defense.}
Multi-agent RL baselines such as IPPO and MAPPO~\cite{dewitt2020independentlearningneedstarcraft,yu2022surprising} represent standard approaches for large populations of agents, and hierarchical or expert-driven controllers introduce domain structure via handcrafted decompositions or expert sub-policies. Multi agent reinforcement learning (MARL) methods such as independent PPO,
centralized critic PPO, and hierarchical MARL architectures provide generic
mechanisms for learning in multi entity, multi role domains. Applied directly
to cyber ranges at enterprise scale, they face several intertwined obstacles.

First, observations are high dimensional. A useful state representation must
summarize vulnerabilities, services, credentials, topological position, and
defense posture for up to tens of thousands of devices. Second, the action
space is combinatorial. Actions select devices together with exploits,
configuration changes, or mitigations, frequently in mixed discrete and
continuous parameterizations. Third, simultaneous learning by attacker and
defender induces non stationarity, which complicates stable training.
Finally, straightforward MARL implementations tend to rely on large critics,
dense action evaluation, and substantial replay or rollout storage, which leads
to prohibitive memory usage and latency as $M$ increases.

These considerations motivate architectures that exploit network structure,
respect partial observability, and concentrate computation on strategically
relevant subsets of devices and actions, while preserving the game-theoretic
interpretation of best responses and equilibria.

While expressive, these methods often require scoring actions over all devices, leading to severe efficiency and memory issues in large networks.
Our empirical results highlight this gap: MARL and hierarchical baselines either hit out-of-memory conditions or exhibit orders-of-magnitude slower forward passes at $M=20{,}000$, while MetaDOAR remains compatible with DOAR's continuous-control best-responses and significantly improves latency and memory usage.

Hierarchical RL introduces temporal and structural abstraction through options and skills~\cite{sutton1999options,barto2003hrl}, feudal architectures~\cite{vezhnevets2017feudal}, and related multi-level controllers typically learn both high-level and low-level policies jointly and allow the high-level policy to specify goals, sub-policies, or task embeddings.
Subsequent work in hierarchical MARL extends these ideas to cooperative and competitive settings, often with centrally trained, decentrally executed structures.
In contrast, MetaDOAR is a \emph{meta-best-response} controller: it does not replace or retrain DOAR's inner policy, but instead learns  which devices the fixed best-response engine is allowed to act on.
This preserves the original continuous-control best response while introducing a structural prior on where computation should be spent.

\section{Preliminaries}
\label{sec:preliminaries}

\subsection{Cyber-Defense as a Partially Observable Stochastic Game}

We model the interaction between an attacker and a defender on an enterprise
network as a two-player zero-sum partially observable stochastic game (POSG).

Let $\mathcal{G} = (\mathcal{V}, \mathcal{E})$ denote the network, where each
node $i \in \mathcal{V}$ is a device such as a server, workstation, or critical
infrastructure component, and edges represent communication, authentication, or
trust relations. Let $M = |\mathcal{V}|$ be the number of devices.

A POSG instance is given by
\[
\left(
\mathcal{S},
\mathcal{A}^{\text{att}},
\mathcal{A}^{\text{def}},
P,
r,
\mathcal{O}^{\text{att}},
\mathcal{O}^{\text{def}},
O^{\text{att}},
O^{\text{def}},
\gamma
\right),
\]
where $\mathcal{S}$ is the state space, $\mathcal{A}^{\text{att}}$ and
$\mathcal{A}^{\text{def}}$ are the joint action spaces, $P$ is the transition
kernel, $r$ is the attacker reward function, $\mathcal{O}^{p}$ are observation
spaces, $O^{p}$ are observation kernels, and $\gamma \in (0,1]$ is the discount
factor.

At each timestep $t$, the true state $s_t \in \mathcal{S}$ encodes host
configurations, vulnerabilities, exploitability indicators, compromise status,
privileges, ownership, deployed defenses, and any relevant visibility or sensor
constraints. The attacker chooses
$a_t^{\text{att}} \in \mathcal{A}^{\text{att}}(s_t)$ and the defender chooses
$a_t^{\text{def}} \in \mathcal{A}^{\text{def}}(s_t)$, possibly based on their
own observation histories. The next state is drawn according to
\[
    s_{t+1} \sim P(\cdot \mid s_t, a_t^{\text{att}}, a_t^{\text{def}}),
\]
the attacker receives reward $r_t = r(s_t, a_t^{\text{att}}, a_t^{\text{def}})$,
and the defender receives $-r_t$, so the game is zero-sum.

Actions are structured and device indexed. A typical atomic action specifies a
target node and an operation such as
\[
(\texttt{node}=i,\; \texttt{type},\; \texttt{exploit/app/config}),
\]
and joint actions may involve selecting among many such candidates.
As $M$ and the catalog of exploits and configurations grow, the induced action
spaces become extremely large and evaluating rich action proposals for many
devices simultaneously becomes a primary computational and memory bottleneck.

\subsection{Double Oracle and DOAR}
\label{subsec:doar}
We use standard parametric policies $\pi_{\theta}(a | o)$ for attacker and defender. Given a fixed opponent mixture, a best response is any policy that maximizes expected discounted utility; we approximate this with deep actor–critic RL (more details and full notation in the appendix).
We consider a two-player zero-sum game between an attacker and a defender.
Let $\Pi_A$ and $\Pi_D$ denote the (typically huge) strategy sets of the
attacker and defender, and let $u(\pi_D,\pi_A)$ be the defender's expected
utility when the defender plays $\pi_D$ and the attacker plays $\pi_A$.

The classical double-oracle algorithm maintains finite strategy subsets
$\Pi_A^t \subseteq \Pi_A$ and $\Pi_D^t \subseteq \Pi_D$.
Each iteration $t$ consists of three steps:
\begin{enumerate}
  \item \textbf{Restricted equilibrium.} Compute a Nash equilibrium
        $(\sigma_D^t,\sigma_A^t)$ of the restricted game induced by
        $\Pi_D^t \times \Pi_A^t$.
  \item \textbf{Best responses.} Compute exact best-response strategies
        $\pi_D^{\text{BR}} \in \arg\max_{\pi_D \in \Pi_D}
        \mathbb{E}_{\pi_A \sim \sigma_A^t} [u(\pi_D,\pi_A)]$
        and
        $\pi_A^{\text{BR}} \in \arg\min_{\pi_A \in \Pi_A}
        \mathbb{E}_{\pi_D \sim \sigma_D^t} [u(\pi_D,\pi_A)]$.
  \item \textbf{Oracle expansion.} If either best response is outside the
        current restricted sets, add it:
        $\Pi_D^{t+1} \leftarrow \Pi_D^t \cup \{\pi_D^{\text{BR}}\}$,
        $\Pi_A^{t+1} \leftarrow \Pi_A^t \cup \{\pi_A^{\text{BR}}\}$,
        and repeat.
\end{enumerate}
When best responses are computed exactly, DO converges to a Nash equilibrium
of the full game~\cite{mcmahan2003doublerobust}.

In large stochastic games such as our cyber defense environment, computing
an exact best response is itself an intractable Markov decision problem.
DOAR replaces exact best responses with \emph{approximate} best responses
learned by deep reinforcement learning.

Fix an attacker mixture $\sigma_A^t$.
From the defender's perspective, this induces a discounted MDP
$(\mathcal{S},\mathcal{A},P^{\sigma_A^t},r,\gamma)$ in which the transition
kernel averages over attacker actions according to $\sigma_A^t$.
DOAR trains a parametric defender policy $\pi_{\theta}$ and critic
$Q_{\phi}$ (in our case, a DDPG-style actor--critic) to maximize expected
discounted defender utility against $\sigma_A^t$:
\[
  J(\theta)
  \;=\;
  \mathbb{E}\biggl[\sum_{t=0}^{\infty} \gamma^t r(s_t,a_t)\biggr],
  \quad
  a_t = \pi_{\theta}(s_t).
\]
The critic parameters $\phi$ are updated by minimizing a standard TD loss
\[
  \mathcal{L}_{\text{critic}}(\phi)
  =
  \mathbb{E}_{(s,a,r,s') \sim \mathcal{D}}
  \Bigl[
    \bigl(
      Q_{\phi}(s,a) - y(r,s')
    \bigr)^2
  \Bigr],
\]
where $\mathcal{D}$ is a replay buffer and
$y(r,s') = r + \gamma Q_{\phi^-}(s',\pi_{\theta^-}(s'))$ uses
slow-moving target networks $(\phi^-,\theta^-)$.
The actor parameters $\theta$ are updated by the deterministic policy
gradient
\[
  \nabla_{\theta} J(\theta)
  \approx
  \mathbb{E}_{s \sim \mathcal{D}}
  \bigl[
    \nabla_{\theta} \pi_{\theta}(s)
    \nabla_a Q_{\phi}(s,a)\big|_{a=\pi_{\theta}(s)}
  \bigr].
\]

After training converges (or after a fixed number of gradient steps),
we treat $\pi_{\theta}$ as an \emph{approximate} best response to
$\sigma_A^t$ and add it to the defender's strategy set.
The attacker side is handled symmetrically.
MetaDOAR builds directly on this DOAR framework: it accelerates
the approximate best-response training by constraining the defender's
action space via a learned meta-controller while keeping the surrounding
DOAR/PSRO loop unchanged.

\section{Approach}
\label{sec:approach}

We propose \textbf{MetaDOAR}, a hierarchical best-response architecture that augments a strong continuous-control double-oracle policy (DOAR) with a lightweight meta-controller. The meta-controller (i) learns which devices are most relevant, (ii) restricts DOAR's low-level actor--critic to a small top-$k$ subset of devices instead of all $M$ devices, and (iii) caches critic evaluations so that unchanged parts of the network do not trigger recomputation. This substantially improves latency and memory usage at scale.

We consider the standard double-oracle (DO) procedure with parametric best responses.
In our setting, DOAR maintains actor--critic policies for attacker and defender;
each new best-response (BR) policy is trained against the current equilibrium and
added to the strategy set.
In the original implementation, BR computation implicitly considers actions over
\emph{all} devices, so both policy evaluation and critic-based action decoding scale
poorly with the network size $M$.

MetaDOAR leaves DOAR's training and decoding code unchanged, but interposes
a meta-controller that decides which devices DOAR is allowed to act on at each step.

\subsection{Meta-Hierarchical Best Response}

For each role (attacker or defender), we instantiate a
\emph{MetaHierarchicalBestResponse} module.
Given a role-specific global observation $o$ and the current network, the module:

\begin{enumerate}
    \item Computes a relevance score for each device and selects a top-$k$ subset of indices.
          Unless specified otherwise, we set
          \[
              k = \max\{1, \alpha\lceil \log_{10}(\max(10, M)) \rceil\},
          \]
          so that $k$ grows only logarithmically in $M$, proportional to a hyperparameter $\alpha>0$. 
    \item Intersects this subset with a role-specific visibility mask, yielding
          a set of \emph{allowed} devices.
    \item Restricts DOAR's low-level actor--critic to propose and evaluate actions
          only on these allowed devices.
\end{enumerate}

To enable scalable, topology-aware selection, MetaDOAR constructs a compact
structural feature vector for each device $i \in \{0, \dots, M-1\}$:
\begin{itemize}
    \item a fixed random ID embedding $e_i \in \mathbb{R}^{d_{\text{id}}}$;
    \item the normalized graph degree of node $i$, obtained from the
          environment's subnet/graph representation;
    \item binary flags indicating whether the device is visible and/or
          attacker-owned (according to the simulator's state).
\end{itemize}
These quantities are concatenated into a structural feature
$x_i = [e_i;\,\deg(i);\text{visible}(i);\text{owned}(i)]$ and passed through
a small two-layer MLP (the \emph{node projector}) to obtain a structural
embedding $z_i \in \mathbb{R}^d$.
Fixing the node ID embeddings and only learning the state-side projector keeps the number of trainable parameters independent of $M$, so the meta-controller's training cost depends on $M$ only through $k$.
All node embeddings are stored in a cache
$E_{\text{cache}} \in \mathbb{R}^{M \times d}$.
Nodes are marked ``dirty'' when affected by actions or local changes (e.g.,
ownership flips); only dirty nodes are re-embedded, which amortizes the cost
to nearly constant per step.

The role-specific global observation $o$ is mapped to a low-dimensional
\emph{state embedding} $h(o) \in \mathbb{R}^d$ by a separate two-layer MLP
(the \emph{state projector}).
Device relevance scores are computed via a simple compatibility function:
\[
    \mathrm{score}(i \mid o)
    = z_i^\top h(o) + b,
\]
where $b$ is a learned scalar bias.
Given these scores and the visibility mask, we select the top-$k$ visible devices
via partial sort. It is a ranking model that predicts which devices are most promising for the downstream DOAR policy and is intentionally lightweight.

\smallskip
\noindent\textbf{Training the meta-controller.}
During best-response training, the meta-controller runs alongside DOAR.
At each time step $t$ it logs the global observation $o_t$, the top-$k$ subset
$\mathcal{K}_t \subseteq \{0,\dots,M-1\}$ selected by the scoring rule above, and the resulting scalar reward $r_t$.
We define a soft selection mask
\[
\mathrm{mask}_t(i) =
\begin{cases}
\frac{1}{|\mathcal{K}_t|}, & i \in \mathcal{K}_t,\\[2pt]
0, & \text{otherwise},
\end{cases}
\]
and form a scalar prediction
\[
    \hat{r}(o_t)
    = \sum_{i=0}^{M-1} \mathrm{mask}_t(i)\,\mathrm{score}(i \mid o_t).
\]
The parameters of the state projector, node projector, and bias $b$ are trained to minimize a mean-squared error loss
\[
    \mathcal{L}
    = \mathbb{E}\bigl[(\hat{r}(o_t) - r_t)^2\bigr],
\]
using replayed tuples $(o_t, \mathcal{K}_t, r_t)$ collected under the current DOAR
best-response policy.
The structural inputs $x_i$ (including the fixed ID embeddings $e_i$ and graph
features) act as topology-aware inputs to this learned scoring function.
This regression-style objective encourages the meta-controller to assign higher
scores to device subsets that correlate with higher returns under the current
DOAR behavior.

DOAR decodes continuous actor outputs into structured actions using its critic
for Q-value evaluation.
Naively enumerating actions over all devices and exploits is prohibitively
expensive for large $M$.
MetaDOAR replaces this with a restricted, cached evaluation over the selected
devices, maintaining an LRU cache of Q-values keyed by a quantized state
embedding and action tuple to further reduce evaluation cost.

\smallskip
\noindent\textbf{Approximation guarantees for MetaDOAR.}
We now consider the approximation error that comes purely from
MetaDOAR's pruning step, where the meta-controller limits the defender to a
learned top-$k$ subset of actions in each state.
To keep the argument clean, we pretend that the exact optimal best-response $Q$-function
exists and ignore all sources of statistical error: estimation noise,
function approximation, and any bias due to stale cache entries (these are
handled separately in the discussion that follows). As is standard in this space, we additionally assume that $s$ is decodable from $o$ asymptotically such that we can treat the POMDP as an MDP~\cite{nguyen2021deep}.

We view the defender's best-response problem against a fixed attacker mixture
as a discounted MDP $(\mathcal{S}, \mathcal{A}, P, r, \gamma)$.
The defender player follows without loss of generality.
Note that the underlying game is formally a partially observable stochastic game (POSG), so the true best-response problem is a POMDP.
We additionally assume the existence of a state estimator $\hat{s}_t = \phi(o_{0:t})$ such that the induced MDP in $\hat{s}_t$ has the same optimal best-response $Q$-function as the original POMDP, so the MDP bound applies in this encoded state space.

\begin{lemma}
\label{lem:qgap_to_valuegap_2}
Suppose a stationary policy $\pi$ satisfies
\[
\sup_{s \in \mathcal{S}}
\Bigl(
\max_{a \in \mathcal{A}(s)} Q^\star(s,a)
\;-\;
Q^\star\bigl(s, \pi(s)\bigr)
\Bigr)
\;\le\; \varepsilon_Q.
\]
Then its value function $V^\pi$ obeys
\[
\| V^\star - V^\pi \|_\infty
\;\le\;
\frac{\varepsilon_Q}{1 - \gamma}.
\]
\end{lemma}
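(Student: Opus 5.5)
The plan is a one-step decomposition of $V^\star - V^\pi$, followed by unrolling it through the Bellman equation for $\pi$. We assume $\gamma<1$ (as the bound requires), bounded rewards, and that $Q^\star$ satisfies the Bellman optimality equation with $V^\star(s)=\max_{a\in\mathcal{A}(s)}Q^\star(s,a)$. We also use the policy-evaluation identity
\[
V^\pi(s)=r(s,\pi(s))+\gamma\,\mathbb{E}_{s'\sim P(\cdot\mid s,\pi(s))}\bigl[V^\pi(s')\bigr],
\]
and, for the optimal action-value,
\[
Q^\star(s,a)=r(s,a)+\gamma\,\mathbb{E}_{s'\sim P(\cdot\mid s,a)}\bigl[V^\star(s')\bigr].
\]

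First, fix any state $s$ and write
\[
V^\star(s)-V^\pi(s)=\bigl(V^\star(s)-Q^\star(s,\pi(s))\bigr)+\bigl(Q^\star(s,\pi(s))-V^\pi(s)\bigr).
\]
The first bracket is at most $\varepsilon_Q$ by hypothesis. For the second, subtract the two Bellman identities at $a=\pi(s)$. The reward terms cancel, leaving
\[
\gamma\,\mathbb{E}_{s'\sim P(\cdot\mid s,\pi(s))}\bigl[V^\star(s')-V^\pi(s')\bigr].
\]
Hence $V^\star(s)-V^\pi(s)\le \varepsilon_Q+\gamma\,\|V^\star-V^\pi\|_\infty$.

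Next, observe that $V^\star-V^\pi\ge 0$ pointwise, since $V^\star$ is optimal. So the left-hand side is nonnegative, and taking the supremum over $s$ gives
\[
\|V^\star-V^\pi\|_\infty\le\varepsilon_Q+\gamma\|V^\star-V^\pi\|_\infty .
\]
Because rewards are bounded and $\gamma<1$, both value functions are bounded, so the norm is finite. Rearranging yields the claimed bound $\varepsilon_Q/(1-\gamma)$.

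The only delicate point is justifying finiteness of $\|V^\star-V^\pi\|_\infty$ before rearranging. If one prefers to avoid it, the alternative is to unroll the recursion along the trajectory of $\pi$. This gives
\[
V^\star(s)-V^\pi(s)=\mathbb{E}^\pi\Bigl[\sum_t\gamma^t\bigl(V^\star(s_t)-Q^\star(s_t,\pi(s_t))\bigr)\Bigr]
\]
(the performance-difference identity). Each summand is bounded by $\varepsilon_Q$, which gives the same geometric sum $\varepsilon_Q/(1-\gamma)$.

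Under the paper's assumption that the encoded state $\hat{s}_t$ induces an MDP with the same $Q^\star$, the argument applies verbatim in that state space.
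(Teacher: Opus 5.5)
Your proof is correct and is essentially the paper's argument: splitting $V^\star(s)-V^\pi(s)$ through $Q^\star(s,\pi(s))$ is exactly the paper's decomposition $(TV^\star - T^\pi V^\star) + (T^\pi V^\star - T^\pi V^\pi)$ written pointwise, with the $\gamma$-contraction of $T^\pi$ made explicit through the two Bellman identities. Your check that $\|V^\star - V^\pi\|_\infty<\infty$ (from bounded rewards and $\gamma<1$) before rearranging is a step the paper leaves implicit; the unrolled performance-difference alternative does not actually avoid that check, because the tail term $\gamma^n\,\mathbb{E}^\pi[V^\star(s_n)-V^\pi(s_n)]$ vanishes only under the same boundedness.
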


The proof is included in the appendix.
Intuitively, at each state the policy $\pi$ loses at most $\varepsilon_Q$ in one-step $Q^\star$ value compared to the optimal action.
Because future rewards are geometrically discounted by $\gamma$, these per-step losses add up to at most a geometric series $\varepsilon_Q + \gamma\varepsilon_Q + \gamma^2\varepsilon_Q + \dots = \varepsilon_Q / (1-\gamma)$, which yields the claimed bound.

We now plug the MetaDOAR pruning rule into this lemma.

\begin{theorem}[MetaDOAR yields an $\varepsilon$–best response]
Fix the opponent mixture so a player has action set $\mathcal{A}(s)$ and discount $\gamma\in(0,1)$.
For each state $s$, the meta-controller selects a nonempty pruned set $S(s)\subseteq \mathcal{A}(s)$, and assume
$\pi_{\text{Meta}}(s)\in\arg\max_{a\in S(s)}Q^\star(s,a)$.
Define
\[
\Delta(s):=\max_{a\in\mathcal{A}(s)}Q^\star(s,a)-\max_{a\in S(s)}Q^\star(s,a)\] and \[
\Delta_{\max}:=\sup_{s\in\mathcal{S}}\Delta(s).
\]
Then
\[
\bigl\|V^\star - V^{\pi_{\text{Meta}}}\bigr\|_\infty \;\le\; \frac{\Delta_{\max}}{1-\gamma}.
\]
\end{theorem}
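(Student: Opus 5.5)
The plan is to reduce the theorem directly to Lemma~\ref{lem:qgap_to_valuegap_2}: I will show that $\pi_{\text{Meta}}$ satisfies the lemma's hypothesis with $\varepsilon_Q=\Delta_{\max}$, and then read off the conclusion.

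\textbf{Step 1 (per-state gap).} Fix any $s\in\mathcal{S}$. Since $S(s)$ is nonempty and $\pi_{\text{Meta}}(s)\in\arg\max_{a\in S(s)}Q^\star(s,a)$, we have $Q^\star(s,\pi_{\text{Meta}}(s))=\max_{a\in S(s)}Q^\star(s,a)$. Hence
\[
\max_{a\in\mathcal{A}(s)}Q^\star(s,a)-Q^\star\bigl(s,\pi_{\text{Meta}}(s)\bigr)=\Delta(s)\le\Delta_{\max}.
\]
This requires the maxima to be attained. I will assume this, as the statement implicitly does by writing $\arg\max$; it holds, for instance, for finite action sets. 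If one wants more generality, suprema can be used, and then $\pi_{\text{Meta}}$ only needs to be an exact maximizer over $S(s)$, as assumed.

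\textbf{Step 2 (uniform bound and stationarity).} Taking the supremum over $s$ gives the hypothesis of Lemma~\ref{lem:qgap_to_valuegap_2} with $\varepsilon_Q=\Delta_{\max}$. I must also confirm that $\pi_{\text{Meta}}$ is a stationary deterministic policy. It is, because $S(s)$ and the selection depend only on $s$, with ties broken by any fixed rule.

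\textbf{Step 3 (conclusion).} Applying the lemma yields $\|V^\star-V^{\pi_{\text{Meta}}}\|_\infty\le\Delta_{\max}/(1-\gamma)$. If $\Delta_{\max}=\infty$, the bound is vacuous, so I will assume bounded rewards, which make $Q^\star$ bounded and $\Delta_{\max}$ finite.

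\textbf{Main obstacle.} The difficulty would lie in the lemma itself, which is taken as given. If it had to be justified, I would use the standard argument: write $V^\star-V^\pi=(T^\star V^\star-T^\pi V^\star)+(T^\pi V^\star-T^\pi V^\pi)$. The first term equals $\max_a Q^\star(s,a)-Q^\star(s,\pi(s))\le\varepsilon_Q$. The second term is bounded by $\gamma\|V^\star-V^\pi\|_\infty$ by nonexpansiveness of $T^\pi$. Rearranging then gives the factor $1/(1-\gamma)$.
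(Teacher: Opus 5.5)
Your proposal is correct and matches the paper's proof step for step: you bound the per-state gap $\max_{a\in\mathcal{A}(s)}Q^\star(s,a)-Q^\star(s,\pi_{\text{Meta}}(s))$ by $\Delta(s)\le\Delta_{\max}$, take the supremum over $s$, and apply Lemma~\ref{lem:qgap_to_valuegap_2} with $\varepsilon_Q=\Delta_{\max}$ (your sketch of the lemma's Bellman-contraction proof is also the paper's). You rightly derive $Q^\star(s,\pi_{\text{Meta}}(s))=\max_{a\in S(s)}Q^\star(s,a)$ from the $\arg\max$ assumption; the paper's appendix instead justifies the needed inequality from membership $\pi_{\text{Meta}}(s)\in S(s)$ alone, which by itself gives only the opposite direction.
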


\begin{table*}[t]
\centering

\label{tab:meta-doar-all}
\scriptsize
\setlength{\tabcolsep}{3pt} % default is 6pt; tighten columns a bit
\begin{tabular}{rrrrrrrr}
\toprule
\textbf{Device Count} & \textbf{DOAR} & \textbf{HAGS} & \textbf{IPPO} & \textbf{MAPPO} & \textbf{HMARLExpert} & \textbf{HMARLMeta} & \textbf{MetaDOAR} \\
\midrule
10    & $24.88\pm24.86$ & $0.03\pm0.02$   & $36.22\pm20.83$ & $1.17\pm1.16$   & $-0.03\pm0.05$   & $50.01\pm25.00$  & $\mathbf{51.52\pm23.49}$ \\
50    & $0.008\pm0.002$ & $2.00\pm1.00$   & $2.00\pm1.00$   & $2.27\pm0.63$   & $1.00\pm1.00$    & $1.23\pm0.91$    & $\mathbf{2.97\pm0.04}$   \\
100   & $0.996\pm0.245$ & $0.50\pm0.50$   & $0.75\pm0.00$   & $0.96\pm0.21$   & $0.27\pm0.26$    & $0.26\pm0.25$    & $\mathbf{1.50\pm0.00}$   \\
1000  & $0.070\pm0.020$ & $0.070\pm0.020$ & $0.060\pm0.008$ & $0.058\pm0.005$ & $0.021\pm0.021$  & $0.020\pm0.012$  & $\mathbf{0.140\pm0.007}$ \\
10000 & $0.006\pm0.001$ & $0.002\pm0.002$ & $0.003\pm0.003$ & $0.003\pm0.003$ & $0.005\pm0.001$  & $0.007\pm0.001$  & $\mathbf{0.010\pm0.002}$ \\
\bottomrule
\end{tabular}

\caption{Average player utility per device in equilibrium. Larger is better. Experiment details are included in the appendix. }
\end{table*}

Proof is included in the appendix. This bound is a standard consequence of the contraction properties of Bellman operators in discounted MDPs \cite{puterman2014mdp},\cite{szepesvari2010rl}. Combined with the observation that strictly dominated strategies receive zero mass in the computed equilibrium,
the $\varepsilon$–best-response guarantee above provides a principled grounding for MetaDOAR’s pruning step, despite the meta-controller being implemented via a learned heuristic.

\smallskip
%\noindent\textbf{Hierarchical Selection and Caching.}
%In many realistic attack and defense scenarios, only a small subset of devices is
%strategically salient at any given time, for example machines near a compromise,
%high value assets, or topological chokepoints.
%Hierarchical designs formalize this by separating coarse selection from fine
%grained decision making.

%A meta controller first maps the current state to a ranking over devices and
%selects a small candidate subset, such as the top $k$ nodes.
%A base policy or best response module then enumerates and evaluates actions only
%for this subset.
%This reduces the cost of actor-critic evaluation while retaining the ability to
%represent rich policies, provided that the meta controller is lightweight,
%selection respects visibility and other constraints, and critic evaluations are
%reused through caching so that unchanged devices do not trigger redundant work.

%The architecture introduced here follows this general
%pattern to obtain scalable and semantically faithful best response computation in
%large enterprise networks.

%\smallskip
%\noindent\textbf{Chunked evaluation.}
%For each allowed device, we enumerate a limited set of candidate actions
%(e.g., capped exploit indices) and encode them using DOAR's existing
%\texttt{encode\_action} interface.
%Candidates are evaluated by the DOAR critic in small batches of configurable
%size to control memory usage.
%This yields Q-values $Q(s,a)$ for candidates localized to the selected devices.

\smallskip
\noindent\textbf{LRU Q-value cache.}
We maintain an LRU (last recently used) cache keyed by
\[
(\texttt{state\_key}, \texttt{node}, \texttt{action-type}, \texttt{exploit}, \texttt{app}),
\]
where \texttt{state\_key} is a hash of the quantized state embedding $h(s)$.
If a similar projected state recurs, we reuse stored Q-values instead of
invoking the critic.
Cache entries involving a node are invalidated when that node or any of its
$k$-hop neighbors is affected (using BFS on the current adjacency),
so devices remain logically ``alive'' without requiring a full
state recomputation.

For each node, we retain the candidate with the highest cached-or-evaluated
Q-value and return the resulting set of actions as the hierarchical best response.
If no candidate survives filtering, we fall back to a trivial safe action or to
the original DOAR proposal.

\smallskip
\noindent\textbf{Q-cache and approximation.}
The Q-value cache is an approximation device rather than a guaranteed error-preserving trick.
Whenever we reuse a cached value, we are implicitly assuming that the projected state and the local neighborhood around the acted-on device have not changed enough to alter the critic’s judgment in a meaningful way.
This will not always hold exactly, however in preferentially connected networks (such as the ones in the experiments and common to networked systems) it often does. We do not claim that MetaDOAR recovers the exact DOAR best response under caching.

To keep this approximation from drifting too far, we combine several simple safeguards:
each cache entry is tagged with a short time-to-live and is discarded after a bounded number of uses; we periodically flush the entire cache; and we invalidate entries whose target device lies within a $k$-hop neighborhood of any node that has just been modified.
In addition, a small fraction of lookups are intentionally recomputed from scratch, which serves as a lightweight spot check on stale entries.

These mechanisms are heuristic, and we do not provide a formal error bound on the induced bias.
Our goal is more modest: reduce redundant critic calls while keeping the induced change in behaviour small enough that MetaDOAR remains competitive with, or better than, the underlying DOAR baseline.
In ablation experiments where we tighten these reuse rules, we observe only minor differences in defender payoffs, while wall-clock time and memory usage increases substantially.

\smallskip
\noindent\textbf{Training the Meta-Controller.}
The meta-controller is trained online using a small replay buffer.
After each environment step, we record
$(o, K_{\text{sel}}, r, o', \mathrm{done})$,
where $K_{\text{sel}}$ is the subset of devices selected by MetaDOAR
and $r$ is the realized reward.

Periodically, we:
\begin{enumerate}
    \item sample a minibatch from the buffer;
    \item construct, for each transition, a soft mask over devices that places
          uniform mass on $K_{\text{sel}}$;
    \item compute a scalar prediction
          \[
              \hat{r}(o) =
              \sum_i \mathrm{mask}_i \cdot
              \bigl(z_i^\top h(o) + b\bigr),
          \]
          and minimize the mean squared error between $\hat{r}(o)$ and $r$.
\end{enumerate}

A convenient way to read this objective is as a crude value factorization:
the scores $q_i(o)$ are encouraged to behave like per-device contributions
to the outcome under the current DOAR behaviour.
Whenever a transition with a high return is seen, the devices that happened
to be in $K_{\text{sel}}$ are nudged toward higher scores; when the return
is poor, the same devices are nudged down.
The soft mask is just a differentiable way of saying “only the currently
selected devices matter for this update”.
Over time, devices that repeatedly co-occur with good outcomes tend to receive
higher scores and are more likely to appear in future top-$k$ sets.

This training rule is deliberately lightweight and should be viewed as
heuristic rather than as a principled solution to credit assignment in
partially observable games.
We do not claim convergence guarantees for the meta-controller, especially
in the presence of non-stationary opponents.
To keep its behaviour stable in practice, we restrict the meta-network to
a small MLP, use conservative learning rates, and maintain a slowly updated
target network for the state embedding.

%This regression-style objective encourages the meta-controller to assign higher
%scores to device subsets that correlate with higher returns under the current
%DOAR behavior.
%We optimize with Adam and use a slow-moving target network for the state projector
%for additional stability.
%The meta-network is intentionally small so that its overhead remains negligible
%relative to the savings it unlocks.

%During the forward pass, the meta-controller computes per-node scores
%$\mathrm{score}(i \mid s) = z_i^\top h(s) + b$ and selects the top-$k$
%visible devices. The DOAR actor--critic is then restricted to proposing
%and evaluating actions only on this subset, while reusing cached critic
%evaluations for unchanged nodes.
Consequently, our approach retains the underlying DOAR continuous-control best-response machinery, but applies it through a learned top-$k$ selector and a cached critic interface. This yields an \emph{approximate} DOAR-style best response that is much cheaper to evaluate in large network settings.
Since MetaDOAR restricts the admissible actions and reuses stored Q-values, it does not compute an exact best response; instead, it aims to approximate the DOAR solution while substantially reducing computation. Because DO converges when each iteration uses (approximate) best responses, MetaDOAR’s $\varepsilon$–best responses yield an $\varepsilon$–Nash equilibrium rather than an exact one.

\begin{table}[t]
\centering
\begin{tabular}{rccc}
\hline
$k$ &
Utility &
Wall Time (ms) &
RAM (MB) \\
\hline
1 &
$135.98 \pm 0.78$ &
$1.28 \pm 0.01$ &
$1405.81 \pm 0.90$ \\
4 &
$127.60 \pm 7.50$ &
$1.50 \pm 0.12$ &
$1417.25 \pm 4.43$ \\
10 &
$138.86 \pm 3.75$ &
$1.56 \pm 0.02$ &
$1416.75 \pm 3.81$ \\

\hline
\end{tabular}
\caption{Effect of the $k$-hop invalidation radius hyperparameter for the $Q$ cache on average player utility and payoff-matrix construction cost for a 1000 device network. Higher utility is better. Lower wall time and RAM is better.}
\label{tab:k_invalidation}
\end{table}

\section{Experiments}

We evaluate MetaDOAR in three sets of experiments, all conducted in the Volt Typhoon CyberDefenseEnv instantiation of CyGym. Unless otherwise noted, we follow the experimental protocol of \cite{lanier}: we reuse the same Double Oracle training loop sans the far apart reset check, reward structure, and optimization hyperparameters, and defer the full list of environment and training settings to the appendix. To study scalability, we instantiate networks with increasing device counts (e.g., 10, 50, 100, 1000, and 10000 devices) and scale key environment parameters approximately linearly with the number of devices (for example, the initial number of attacker-owned devices and other network-level rates and probabilities), so that larger networks represent proportionally larger but qualitatively similar threat environments.
We evaluate the performance of the MetaDOAR best response, comparing convergence speed and final payoffs to the original DOAR best response under matched training budgets. Finally, we demonstrate that MetaDOAR scales to increasingly large networks by comparing its end-to-end performance and computational cost to baseline best-response methods across the full range of network sizes.

We analyze the performance of best response algorithms with the overall Double Oracle game solver. We note that MetaDOAR significantly out performs existing SOTA methods for larger networks and perform as well as existing methods for smaller networks. We compare to DOAR itself, along side MARL baselines such as \cite{dewitt2020independentlearningneedstarcraft} and \cite{yu2022surprising}. We also compare to other hierarchical approaches \cite{sarkar2024activegeospatialsearchefficient}, \cite{singh2025hierarchicalmultiagentreinforcementlearning}. Additional abalations and discussion are included in the appendix. 

Across all settings, the experiments show that MetaDOAR maintains or improves DOAR’s strategic performance while making large games computationally tractable. In equilibrium, MetaDOAR achieves the highest average player utility per device at every network size from $10$ up to $10{,}000$ devices, often roughly doubling DOAR’s payoff at large scale and outperforming all MARL and hierarchical baselines (IPPO, MAPPO, HAGS, HMARLExpert, HMARLMeta). At the same time, hierarchical top-$k$ selection and the Q-value cache keep payoff-matrix construction wall time in the $\sim 1$--$2$ ms range and memory around $\sim 1.4$ GB, with only minor variation as $M$ increases, so MetaDOAR remains tractable where several baselines become unstable or prohibitively slow (see the appendix). Ablations on the $k$-hop invalidation radius show that small radii already recover most of the utility gains, with only modest increases in wall time and RAM as $k$ grows, indicating that the cache and selection mechanism are effective without aggressive invalidation.

%\paragraph{Outer-loop behavior.}
%Our goal in this work is to make each best-response computation scalable; the outer double-oracle loop (meta-strategy update and stopping condition) is identical to the DOAR setup of \cite{lanier}. For completeness, the appendix reports the defender value of the empirical meta-strategy across DO iterations together with the corresponding best-response payoffs. The trajectories for MetaDOAR and DOAR are qualitatively similar in terms of convergence behavior, but MetaDOAR achieves each iteration at substantially lower wall-clock cost.

%\paragraph{Limitations.}
%Our empirical study focuses on a single, but realistic, cyber-range configuration: the Volt Typhoon instantiation of CyGym with one family of topology generators and reward structures \cite{lanier}. We do not claim that the specific numerical improvements will transfer unchanged to all network models or visibility regimes. While this limits results, there are currently no other cyber ranges available that scale to thousands of devices. Applying MetaDOAR to qualitatively different graphs (e.g., grids vs.\ preferential-attachment networks), alternative observation models, and other cyber ranges is an important direction for future work.

\section{Conclusion}
\label{sec:conclusion}

This work introduced \textbf{Metadoar}, a hierarchical meta-controller.
Instead of redesigning the underlying RL algorithms or relaxing the best-response formulation, Metadoar is layered on top of a strong DOAR policy and (i) selects a small, topology-aware subset of relevant devices, (ii) constrains critic-guided decoding to that subset, and (iii) amortizes evaluation via a structured Q-value cache.

Metadoar yields an approximate best-response operator that remains faithful to the original double-oracle procedure while making it tractable on networks an order of magnitude larger than those targeted by prior work.

Empirically, Metadoar consistently reduces end-to-end forward-pass latency and peak memory usage compared to the underlying DOAR implementation, while maintaining compatibility with the existing training and execution stack.
On networks with up to 10{,}000 devices, Metadoar achieves fast, stable best-response computation where several multi-agent RL baselines either incur prohibitive latency or exhaust GPU memory.

\bibliographystyle{named}
\bibliography{flairs}
\clearpage
\onecolumn
\section{Appendix}

\section{Parametric policies, best responses, Nash equilbria}

In a POSG, a general strategy for player $p$ maps its information history to a
distribution over actions. In practice, we use parametric policies
$\pi_{\theta_p}(a \mid o)$, often Markovian in a learned representation of the
agent's observation or belief state.

Given an opponent strategy $\sigma_{-p}$, the value of a policy $\pi_p$ for
player $p$ is
\[
    V_p(\pi_p, \sigma_{-p})
    =
    \mathbb{E}\!\left[
        \sum_{t=0}^{T} \gamma^t r_t^p
        \,\middle|\,
        \pi_p, \sigma_{-p}
    \right],
\]
where $r_t^{\text{att}} = r_t$ and $r_t^{\text{def}} = -r_t$.
A (possibly approximate) best response for player $p$ against $\sigma_{-p}$ is
any policy
\begin{equation}
    \pi^{\mathrm{BR}}_p \in
    \operatorname*{arg\,max}_{\pi} V_p(\pi, \sigma_{-p}).
    \label{eq:br-def-posg}
\end{equation}

In a two player zero sum game, a Nash equilibrium is a pair of strategies
$(\sigma_{\text{att}}^*, \sigma_{\text{def}}^*)$ such that
\[
    V_{\text{att}}(\sigma_{\text{att}}^*, \sigma_{\text{def}}^*)
    \ge
    V_{\text{att}}(\pi_{\text{att}}, \sigma_{\text{def}}^*)
    \quad \forall \pi_{\text{att}},
\]
\[
    V_{\text{def}}(\sigma_{\text{att}}^*, \sigma_{\text{def}}^*)
    \ge
    V_{\text{def}}(\sigma_{\text{att}}^*, \pi_{\text{def}})
    \quad \forall \pi_{\text{def}}.
\]
Equivalently, neither player can improve its value by unilaterally deviating.
Best-response computation is thus the central primitive for equilibrium
approximation in this setting.

In deep RL based formulations, best responses are parametric and obtained by
training neural policies against fixed or slowly updated opponent mixtures.
The cost of computing each such best response is amplified in cyber-defense
domains by the size and structure of the underlying action space.

\paragraph{Experimental protocol and baseline configuration.}
All error bars are reported over 2 seeds. All methods share the same observation encoding, reward function, and simulator implementation (Volt Typhoon instantiation of CyGym) as well as network topology. For each device count $M$, we train DOAR, MetaDOAR, IPPO, MAPPO, HMARLExpert, and HMARLMeta for the same number of environment steps and double-oracle iterations, with identical episode length and rollout parallelism. Hyperparameters for DOAR and the MARL baselines follow the settings recommended in prior work or were tuned once at $M = 10$ and then reused across larger networks; we do not retune per $M$ to keep the comparison controlled. MetaDOAR is only applied to DOAR: none of the baselines use action pruning, top-$k$ selection, or Q-value caching, so they operate on the full device set at each step. All runs are executed on the same GPU/CPU hardware.

\label{sec:appendix}

\captionsetup[table]{skip=2pt}
\begin{table}[!htbp]
  \centering
  \caption{Environment and best-response hyperparameters used in our experiments.}
  \label{tab:env_ddpg_params}
  \begin{tabular*}{\textwidth}{@{\extracolsep{\fill}} llll}
    \toprule
    \multicolumn{4}{l}{\textbf{Environment Parameters}} \\
    \midrule
    Devices ($M$)          & $10, 100, 1000, 10000, 20000$  & Steps / episode           & $100$ \\
    Max\_network\_size     & $M + 10$                       & Initial compromised ratio & $0.4$ \\
    Zero day               & False                          & fast\_scan                & True \\
    work\_scale            & $1.0$                          & comp\_scale               & $30$ \\
    num\_attacker\_owned   & $\max\!\bigl(1,\lfloor 0.05 M + 0.5 \rfloor\bigr)$ & def\_scale & $1.0$ \\
    defaultversion         & $1.0$                          & default\_mode             & $1$ \\
    default\_high          & $3$                            & $\gamma$                  & $0.99$ \\
    DO min iterations      & $10$                           & DO max iterations         & $15$ \\
    \midrule
    \multicolumn{4}{l}{\textbf{Best-Response Hyperparameters}} \\
    reward\_scale          & $1.0$                          & max\_grad\_norm           & $0.5$ \\
    soft-$\tau$ (target update) & $0.01$                    & replay buffer capacity    & $100{,}000$ \\
    \midrule
    \multicolumn{4}{l}{\textbf{MetaDOAR Meta-Controller}} \\
    pruning coefficient $\alpha$ & $1$                      & $k$-hop invalidation radius & $1$ \\
    \midrule
    \multicolumn{4}{l}{\textbf{Defender Agent}} \\
    actor\_lr              & $0.001$                        & critic\_lr                & $0.01$ \\
    critic architecture    & [1229$\rightarrow$128$\rightarrow$128$\rightarrow$1] & & \\
    greedy-$K$             & $5$                            & greedy-$\tau$             & $0.5$ \\
    noise\_std             & $0.1$                          &                           &      \\
    $\lambda_{\text{events}}$ & $0.7$                      & $p_{\text{add}}$          & $0.1$ \\
    \midrule
    \multicolumn{4}{l}{\textbf{Attacker Agent}} \\
    actor\_lr              & $0.001$                        & critic\_lr                & $0.01$ \\
    critic architecture    & [1004$\rightarrow$128$\rightarrow$128$\rightarrow$1] & & \\
    greedy-$K$             & $5$                            & greedy-$\tau$             & $0.5$ \\
    noise\_std             & $0.1$                          &                           &      \\
    $\lambda_{\text{events}}$ & $0.7$                      & $p_{\text{add}}$          & $0.1$ \\
    \bottomrule
  \end{tabular*}
\end{table}
\twocolumn

\twocolumn
\begin{figure}[t]
  \centering
  \includegraphics[scale=0.55]{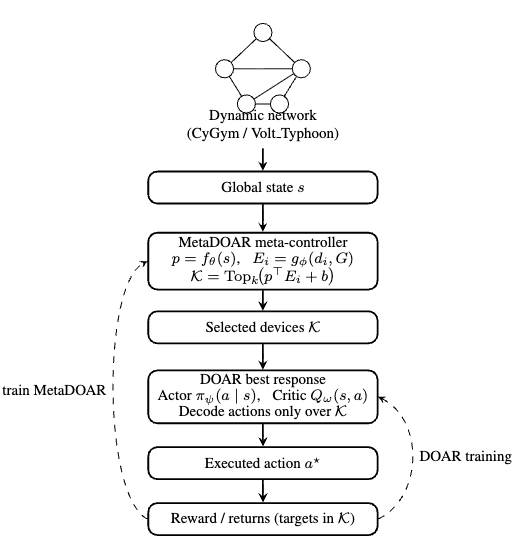}
  \caption{High-level view of MetaDOAR. The meta-controller scores devices
  and selects a small subset $\mathcal{K}$; DOAR then learns a best response
  while its action decoding is restricted to $\mathcal{K}$.}
  \label{fig:MetaDOAR_overview}
\end{figure}
\begin{table}[!htbp]
  \centering
  \caption{MetaDOAR cache and selection hyperparameters (used for all reported experiments).}
  \label{tab:metadoar_cache_params_2}
  \begin{tabular}{ll}
    \toprule
    \textbf{Parameter} & \textbf{Value} \\
    \midrule
    Top-$k$ scaling $\alpha$        & $1$ \\
    $k$-hop invalidation radius     & $1$ \\
    State projection quantization   & round to $3$ decimal places \\
    LRU cache capacity              & $50{,}000$ entries \\
    Cache time-to-live (TTL)       & $50$ cache steps \\
    Cache flush interval            & $200$ cache steps \\
    Random re-eval probability      & $0.01$ per cache hit \\
    \bottomrule
  \end{tabular}
\end{table}

Unless otherwise noted, we use the cache configuration in Table ~\ref{tab:metadoar_cache_params_2} use a 50k-entry LRU, state projections rounded to three decimals, a TTL of 50 cache steps, periodic flush every 200 steps, and a $1$-hop invalidation radius.

\section{Oracle Convergence}
\begin{figure}[t]
    \centering
    \includegraphics[width=0.48\textwidth]{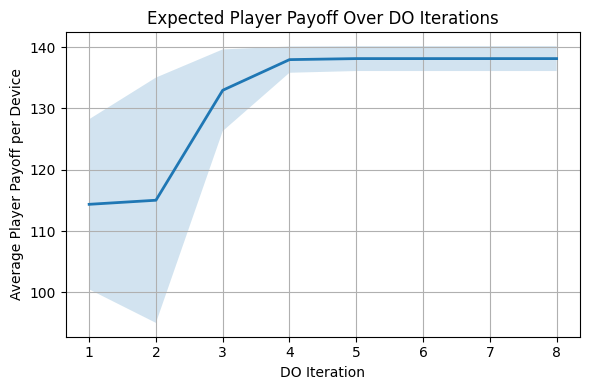}
    \caption{Expected player payoff over Double Oracle iterations for the 1000-device setting. 
    The solid line shows the mean equilibrium payoff per device, averaged over three random seeds. 
    The shaded region denotes a one-standard-error confidence band ($\pm 1$\,SE) across seeds.}
    \label{fig:do_convergence}
\end{figure}

Figure~\ref{fig:do_convergence} summarizes the outer-loop behavior of Double Oracle in the Volt Typhoon environment. The curve shows the expected equilibrium payoff per device, where we first average attacker and defender utilities at each iteration, normalize by the number of devices, and then average across three random seeds; the shaded region depicts a $\pm 1$ standard error band over those seeds. Across all runs, the equilibrium value improves rapidly in the first few iterations (1–3) and then stabilizes at roughly $0.1$ utility per device, with only small seed-to-seed variation thereafter. This pattern suggests that the DO outer loop converges in a handful of iterations on this game instance and that the learned equilibrium value is robust to initialization noise.
\section{Theoretical Considerations for MetaDOAR}

\begin{assumption}[Fixed-opponent MDP]
\label{ass:fixed_opponent}
The attacker commits to a stationary mixed strategy. Consequently, from the
defender's perspective, the environment is a Markov decision process with
(i) state space $\mathcal{S}$,
(ii) per-state action sets $\mathcal{A}(s)$ (the \emph{full} action set),
(iii) transition kernel $P(\cdot \mid s,a)$ and reward function $r(s,a)$, and
(iv) discount factor $\gamma \in (0,1)$.
\end{assumption}

\begin{assumption}[Bounded rewards]
\label{ass:bounded_rewards}
There is a finite constant $R_{\max}$ such that
$|r(s,a)| \le R_{\max}$ for every state--action pair $(s,a)$.
\end{assumption}

Let $V^\star$ and $Q^\star$ be the optimal value and $Q$-functions for this
MDP (i.e., optimal w.r.t.\ the full action sets $\mathcal{A}(s)$).

\begin{definition}[MetaDOAR pruning]
\label{def:pruning}
For each state $s \in \mathcal{S}$, the MetaDOAR meta-controller chooses a
non-empty subset $S(s) \subseteq \mathcal{A}(s)$ of \emph{allowed} actions.
The low-level actor (DOAR) then selects an action
$\pi_{\text{Meta}}(s) \in S(s)$. We assume that both the mapping
$s \mapsto S(s)$ and the policy $\pi_{\text{Meta}}$ are stationary.
\end{definition}

Given a pruning rule $S(\cdot)$ and the optimal $Q^\star$, we quantify the
loss of pruning at a state $s$ by
\[
\Delta(s)
\;:=\;
\max_{a \in \mathcal{A}(s)} Q^\star(s,a)
\;-\;
\max_{a \in S(s)} Q^\star(s,a).
\]
By definition $\Delta(s) \ge 0$, and $\Delta(s) = 0$ whenever at least one
optimal action survives in $S(s)$.

Since the rewards are finite, $\Delta_{\max}$ is maximized at the largest possible value drop at a single
decision point that can be caused by throwing away the globally best action.

We will use a standard connection between one-step suboptimality in $Q^\star$
and the resulting loss in value for discounted MDPs.

\begin{lemma}[From $Q^\star$-gaps to value loss]
\label{lem:qgap_to_valuegap}

Suppose a stationary policy $\pi$ satisfies
\[
\sup_{s \in \mathcal{S}}
\Bigl(
\max_{a \in \mathcal{A}(s)} Q^\star(s,a)
\;-\;
Q^\star\bigl(s, \pi(s)\bigr)
\Bigr)
\;\le\; \varepsilon_Q.
\]
Then its value function $V^\pi$ obeys
\[
\| V^\star - V^\pi \|_\infty
\;\le\;
\frac{\varepsilon_Q}{1 - \gamma}.
\]
\end{lemma}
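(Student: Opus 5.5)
The plan is a direct contraction argument comparing the Bellman operators $T^\star$ and $T^\pi$. By Assumption~\ref{ass:bounded_rewards} and $\gamma<1$, both $V^\star$ and $V^\pi$ are bounded. We use three facts:
\begin{itemize}
\item $V^\star$ is the fixed point of the optimality operator $(T^\star V)(s)=\max_{a\in\mathcal{A}(s)}\bigl[r(s,a)+\gamma\,\mathbb{E}_{s'\sim P(\cdot\mid s,a)}V(s')\bigr]$.
\item $V^\pi$ is the fixed point of the policy operator $(T^\pi V)(s)=r(s,\pi(s))+\gamma\,\mathbb{E}_{s'\sim P(\cdot\mid s,\pi(s))}V(s')$.
\item The optimal $Q$-function satisfies $Q^\star(s,a)=r(s,a)+\gamma\,\mathbb{E}_{s'}V^\star(s')$ and $V^\star(s)=\max_{a}Q^\star(s,a)$.
\end{itemize}
It is convenient to work with $V^\star-V^\pi$, which is nonnegative pointwise since $V^\star$ is optimal. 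The absolute value in the sup-norm therefore causes no sign issues.

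The key step is to decompose, for each state $s$,
\[
V^\star(s)-V^\pi(s)=\bigl(V^\star(s)-Q^\star(s,\pi(s))\bigr)+\bigl(Q^\star(s,\pi(s))-V^\pi(s)\bigr).
\]
The first term equals $\max_a Q^\star(s,a)-Q^\star(s,\pi(s))\le\varepsilon_Q$ by hypothesis. For the second term, expand both $Q^\star(s,\pi(s))$ and $V^\pi(s)=r(s,\pi(s))+\gamma\,\mathbb{E}_{s'\sim P(\cdot\mid s,\pi(s))}V^\pi(s')$ with the same action $\pi(s)$. The rewards cancel, leaving
\[
\gamma\,\mathbb{E}_{s'\sim P(\cdot\mid s,\pi(s))}\bigl[V^\star(s')-V^\pi(s')\bigr]\le\gamma\,\|V^\star-V^\pi\|_\infty .
\]

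Taking the supremum over $s$ gives $\|V^\star-V^\pi\|_\infty\le\varepsilon_Q+\gamma\|V^\star-V^\pi\|_\infty$. Since the norm is finite by boundedness and $\gamma<1$, rearranging yields $\|V^\star-V^\pi\|_\infty\le\varepsilon_Q/(1-\gamma)$.

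As a sanity check, one can instead unroll the recursion along the trajectory generated by $\pi$. This gives the telescoping identity $V^\star(s_0)-V^\pi(s_0)=\mathbb{E}\sum_t\gamma^t\bigl(V^\star(s_t)-Q^\star(s_t,\pi(s_t))\bigr)$, which is the geometric-series intuition stated after the lemma.

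The main point needing care is the rearrangement step. It requires that $\|V^\star-V^\pi\|_\infty$ be finite, which follows from $|V|\le R_{\max}/(1-\gamma)$, and that the suprema exist. If $\mathcal{A}(s)$ is infinite, as with continuous DOAR outputs, I would read $\max$ as $\sup$; the argument goes through unchanged because it only uses $V^\star(s)=\sup_a Q^\star(s,a)$. Measurability of $\pi$ is implicitly assumed so that $V^\pi$ is well defined. The theorem then follows immediately: set $\pi=\pi_{\text{Meta}}$ and observe that $\max_a Q^\star(s,a)-Q^\star(s,\pi_{\text{Meta}}(s))=\Delta(s)\le\Delta_{\max}$, so we may take $\varepsilon_Q=\Delta_{\max}$.
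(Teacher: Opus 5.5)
Your proof is correct and is essentially the paper's argument. Your pointwise split $V^\star(s)-Q^\star(s,\pi(s))$ plus $Q^\star(s,\pi(s))-V^\pi(s)$ is exactly the paper's decomposition $(TV^\star-T^\pi V^\star)+(T^\pi V^\star-T^\pi V^\pi)$: you verify the $\gamma$-contraction of $T^\pi$ by hand, use the same rearrangement, and add the check that $\|V^\star-V^\pi\|_\infty<\infty$, which the paper leaves implicit.
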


\begin{proof}
Let $T$ be the optimal Bellman operator and $T^\pi$ the Bellman operator for
policy $\pi$:
\[
[T V](s) = \max_{a \in \mathcal{A}(s)}
\Bigl( r(s,a) + \gamma \mathbb{E}[V(s') \mid s,a] \Bigr)\] and
\quad
\[[T^\pi V](s) = r\bigl(s,\pi(s)\bigr) + \gamma \mathbb{E}[V(s') \mid s,\pi(s)].
\]
We have $V^\star = T V^\star$ and $V^\pi = T^\pi V^\pi$. Thus
\[
V^\star - V^\pi
= T V^\star - T^\pi V^\pi
= (T V^\star - T^\pi V^\star) + (T^\pi V^\star - T^\pi V^\pi).
\]
Taking the $\ell_\infty$ norm and using that $T^\pi$ is a
$\gamma$-contraction under $\|\cdot\|_\infty$,
\[
\|V^\star - V^\pi\|_\infty
\le
\|T V^\star - T^\pi V^\star\|_\infty
+ \gamma \|V^\star - V^\pi\|_\infty.
\]
After rearranging we obtain
\[
(1-\gamma)\|V^\star - V^\pi\|_\infty
\le
\|T V^\star - T^\pi V^\star\|_\infty.
\]
For each $s$,
\[
[T V^\star](s) - [T^\pi V^\star](s)
=
\max_{a \in \mathcal{A}(s)} Q^\star(s,a)
- Q^\star\bigl(s,\pi(s)\bigr),
\]
so the right-hand side equals
$\sup_s \bigl(\max_{a \in \mathcal{A}(s)} Q^\star(s,a) - Q^\star(s,\pi(s))\bigr)
\le \varepsilon_Q$ by assumption.
Dividing by $(1-\gamma)$ yields the claim.
\end{proof}

\begin{theorem}[MetaDOAR as an $\varepsilon$--best response to a fixed mixture]
\label{thm:metadoar_eps_br}
The MetaDOAR
policy $\pi_{\text{Meta}}$ satisfies
\[
\| V^\star - V^{\pi_{\text{Meta}}} \|_\infty
\;\le\;
\frac{\Delta_{\max}}{1 - \gamma}.
\]
Equivalently, against the fixed opponent mixture (Assumption
\ref{ass:fixed_opponent}), $\pi_{\text{Meta}}$ is an $\varepsilon$--best response
w.r.t.\ the \emph{full} action sets $\mathcal{A}(s)$ with
\[
\varepsilon \;=\; \frac{\Delta_{\max}}{1 - \gamma}.
\]
\end{theorem}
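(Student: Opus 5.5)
The plan is to reduce Theorem~\ref{thm:metadoar_eps_br} directly to Lemma~\ref{lem:qgap_to_valuegap}, using $\varepsilon_Q := \Delta_{\max}$. Under Assumption~\ref{ass:fixed_opponent} the best-response problem is a discounted MDP with $\gamma<1$. Under Assumption~\ref{ass:bounded_rewards}, $V^\star$ and $Q^\star$ are bounded by $R_{\max}/(1-\gamma)$. Two consequences follow: every quantity in the statement is finite, and $\Delta_{\max} \le 2R_{\max}/(1-\gamma) < \infty$. So the hypothesis of the lemma is meaningful. By Definition~\ref{def:pruning}, $\pi_{\text{Meta}}$ is stationary and deterministic, so it is a valid input to the lemma.

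The key step is a pointwise identity. By the theorem's assumption, $\pi_{\text{Meta}}(s) \in \arg\max_{a \in S(s)} Q^\star(s,a)$, so
\[
Q^\star\bigl(s,\pi_{\text{Meta}}(s)\bigr) = \max_{a\in S(s)} Q^\star(s,a).
\]
The maximum over $S(s)$ is attained because $S(s)$ is nonempty. If the action sets are infinite, I would either assume attainment or note that the argmax assumption already presupposes it. Substituting this into the lemma's gap gives, for every $s$,
\[
\max_{a\in\mathcal{A}(s)}Q^\star(s,a) - Q^\star\bigl(s,\pi_{\text{Meta}}(s)\bigr) = \Delta(s) \le \Delta_{\max}.
\]
Taking the supremum over $s$ shows that the lemma's hypothesis holds with $\varepsilon_Q = \Delta_{\max}$. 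The lemma then yields $\|V^\star - V^{\pi_{\text{Meta}}}\|_\infty \le \Delta_{\max}/(1-\gamma)$.

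For the ``equivalently'' clause, I would note that $V^\star(s) \ge V^\pi(s)$ for every policy $\pi$ that uses the full action sets. Hence the sup-norm bound implies, for every initial state and therefore every initial distribution, $V^{\pi_{\text{Meta}}} \ge \sup_\pi V^\pi - \varepsilon$ with $\varepsilon = \Delta_{\max}/(1-\gamma)$. This is exactly the definition of an $\varepsilon$-best response against the fixed mixture.

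The main subtlety is not analytic but definitional. The condition on $\pi_{\text{Meta}}$ is stated with respect to $Q^\star$ of the \emph{unpruned} MDP, not the $Q$-function of the pruned MDP. This is what makes the one-line reduction valid, and I would state it explicitly. If one instead only knew that DOAR is optimal within the pruned MDP, I would fall back on a direct comparison: let $V_S^\star$ denote the optimal value of the pruned MDP, and bound $V^\star - V_S^\star$ via the contraction argument $\|TV^\star - T_S V^\star\|_\infty \le \Delta_{\max}$. This gives the same $\Delta_{\max}/(1-\gamma)$ bound. That alternative is not needed here, however.
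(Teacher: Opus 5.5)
Your proof is correct and follows the paper's route: you establish the pointwise bound $\max_{a\in\mathcal{A}(s)}Q^\star(s,a)-Q^\star(s,\pi_{\text{Meta}}(s))\le\Delta(s)\le\Delta_{\max}$ and then apply Lemma~\ref{lem:qgap_to_valuegap} with $\varepsilon_Q:=\Delta_{\max}$. Your explicit use of the condition $\pi_{\text{Meta}}(s)\in\arg\max_{a\in S(s)}Q^\star(s,a)$ for that step is the right justification, whereas the paper's appendix proof derives $Q^\star(s,\pi_{\text{Meta}}(s))\ge\max_{a\in S(s)}Q^\star(s,a)$ from mere membership $\pi_{\text{Meta}}(s)\in S(s)$, which only gives the reverse inequality; the argmax hypothesis stated in the main-text version of the theorem is genuinely needed.
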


\begin{proof}
Fix any state $s$. By Definition \ref{def:pruning}, $\pi_{\text{Meta}}(s)\in S(s)$,
and therefore
\[
Q^\star\bigl(s,\pi_{\text{Meta}}(s)\bigr)
\;\ge\;
\max_{a \in S(s)} Q^\star(s,a).
\]
Subtracting both sides from $\max_{a \in \mathcal{A}(s)} Q^\star(s,a)$ gives
\[
\max_{a \in \mathcal{A}(s)} Q^\star(s,a)
- Q^\star\bigl(s,\pi_{\text{Meta}}(s)\bigr)\]
\[\le\]
\[\max_{a \in \mathcal{A}(s)} Q^\star(s,a)
- \max_{a \in S(s)} Q^\star(s,a)
\;=\;
\Delta(s)
\;\le\;
\Delta_{\max}.
\]
Taking $\sup_{s \in \mathcal{S}}$ of the left-hand side yields
\[
\sup_{s \in \mathcal{S}}
\Bigl(
\max_{a \in \mathcal{A}(s)} Q^\star(s,a)
- Q^\star\bigl(s,\pi_{\text{Meta}}(s)\bigr)
\Bigr)
\;\le\;
\Delta_{\max}.
\]
Applying Lemma \ref{lem:qgap_to_valuegap} with $\varepsilon_Q := \Delta_{\max}$
gives
\[
\| V^\star - V^{\pi_{\text{Meta}}} \|_\infty
\;\le\;
\frac{\Delta_{\max}}{1-\gamma},
\]
which proves the claim.
\end{proof}

\section{Scalability Experiments}

We examine various network scaling metrics. We note that MetaDOAR is typically on par with DOAR and HAGS, and significantly better than IPPO, MAPPO, and HMARLMeta. We note that MetaDOAR does not explode in complexity as network sizes grow and is typically as fast or faster than existing SOTA. We note that MARL methods completely fail with out of memory errors at 20000 device counts.

These results show us that MetaDOAR provides better performance with large scaling cost, in contrast to other methods.
\begin{figure*}[t]
    \centering

    % Row 1 (3 plots)
    \begin{subfigure}{0.32\textwidth}
        \centering
        \includegraphics[width=\linewidth]{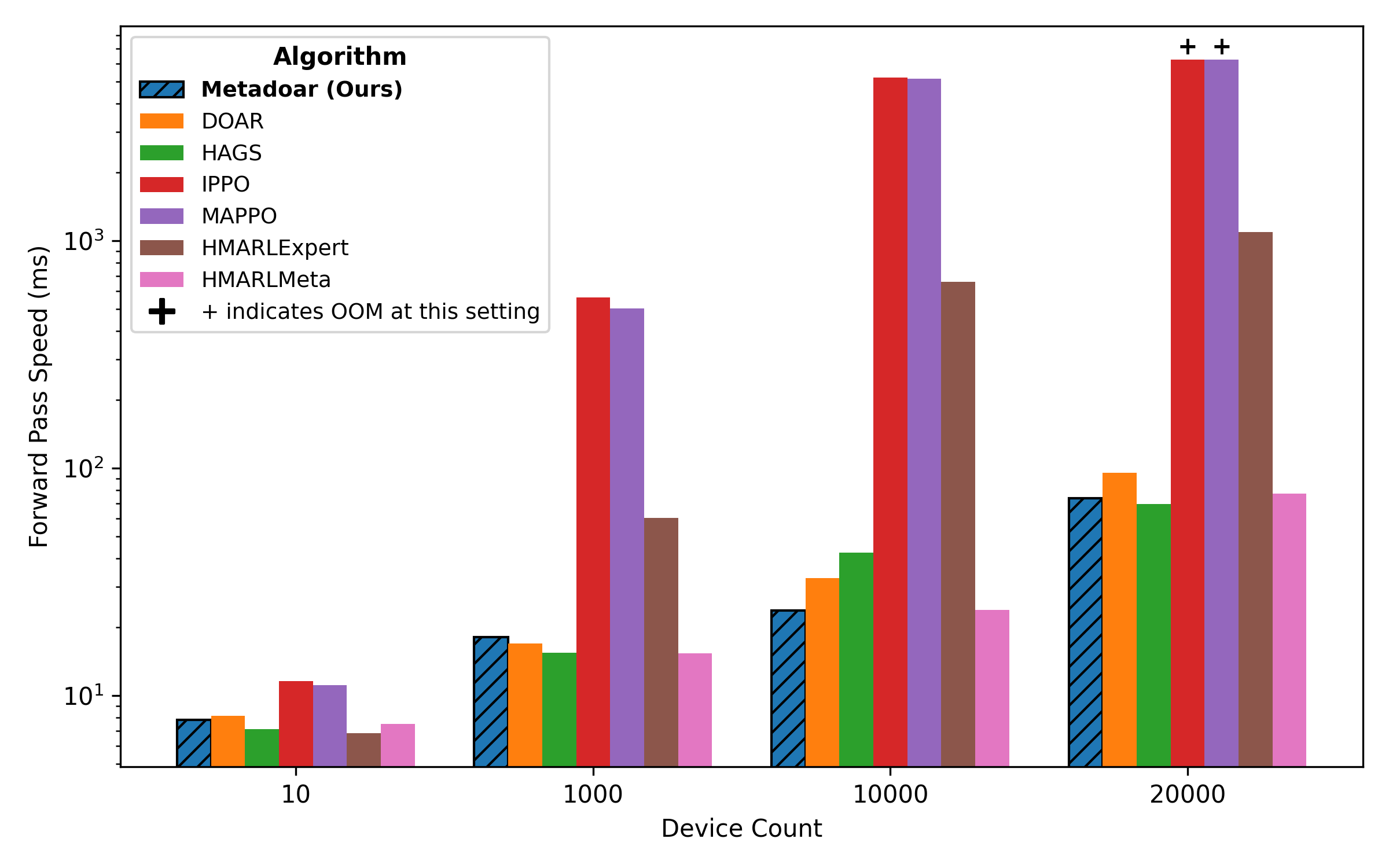}
        \caption{Forward Pass Wall Time}
        \label{fig:scaling-forward}
    \end{subfigure}
    \hfill
    \begin{subfigure}{0.32\textwidth}
        \centering
        \includegraphics[width=\linewidth]{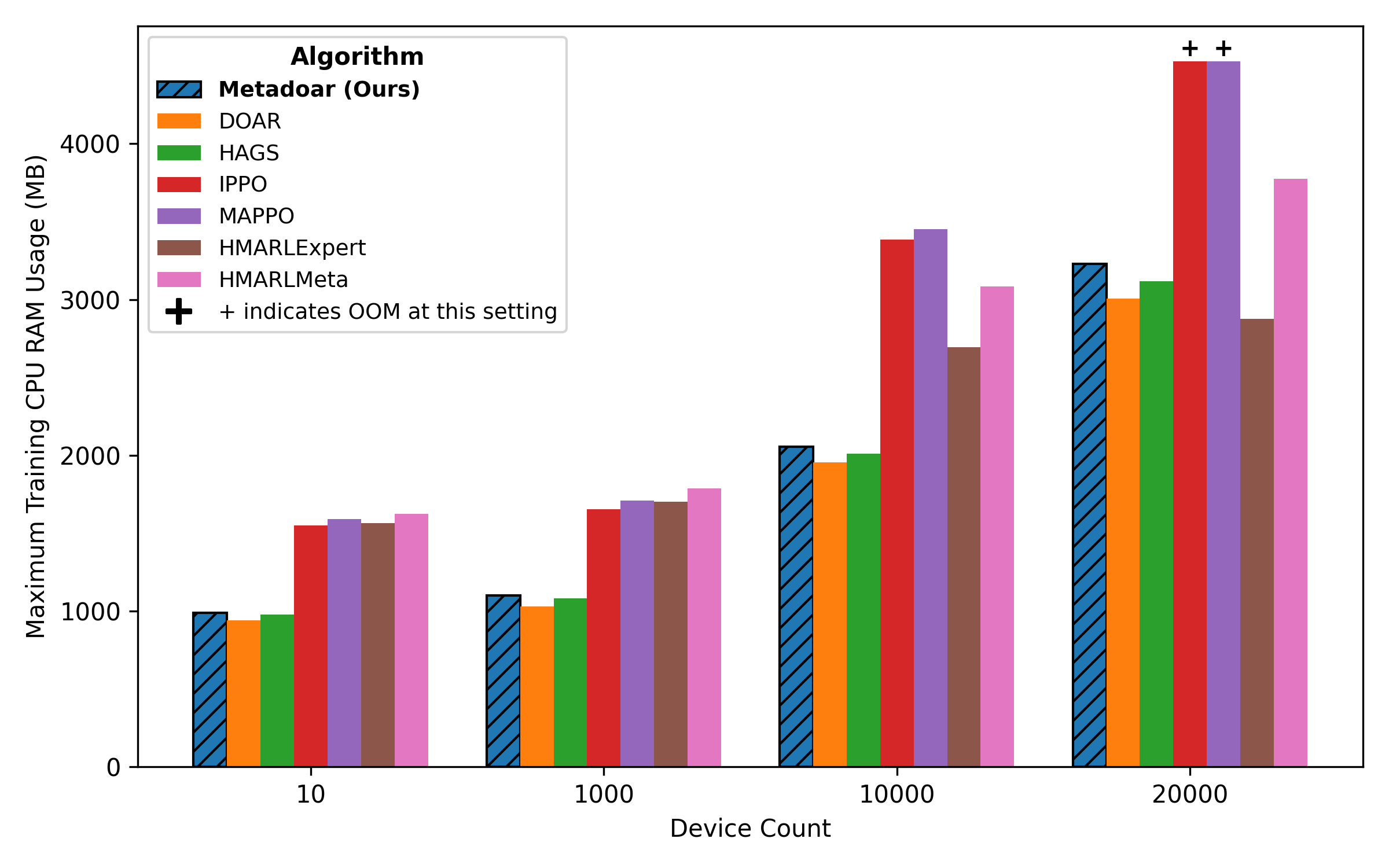}
        \caption{Max Training CPU RAM}
        \label{fig:scaling-train-ram}
    \end{subfigure}
    \hfill
    \begin{subfigure}{0.32\textwidth}
        \centering
        \includegraphics[width=\linewidth]{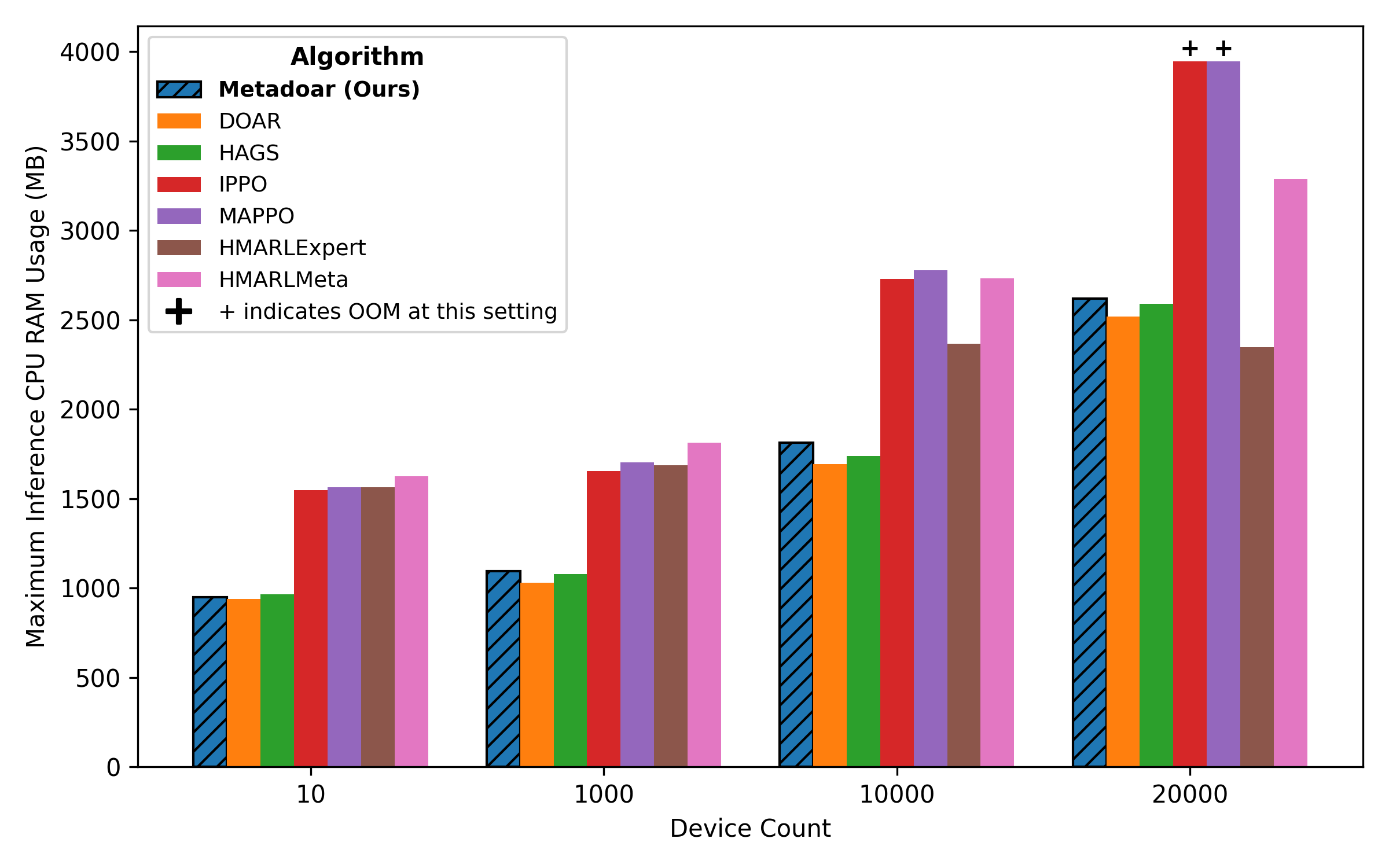}
        \caption{Max Inference CPU RAM}
        \label{fig:scaling-infer-ram}
    \end{subfigure}

    \vspace{0.6em}

    % Row 2 (2 plots, centered)
    \begin{subfigure}{0.38\textwidth}
        \centering
        \includegraphics[width=\linewidth]{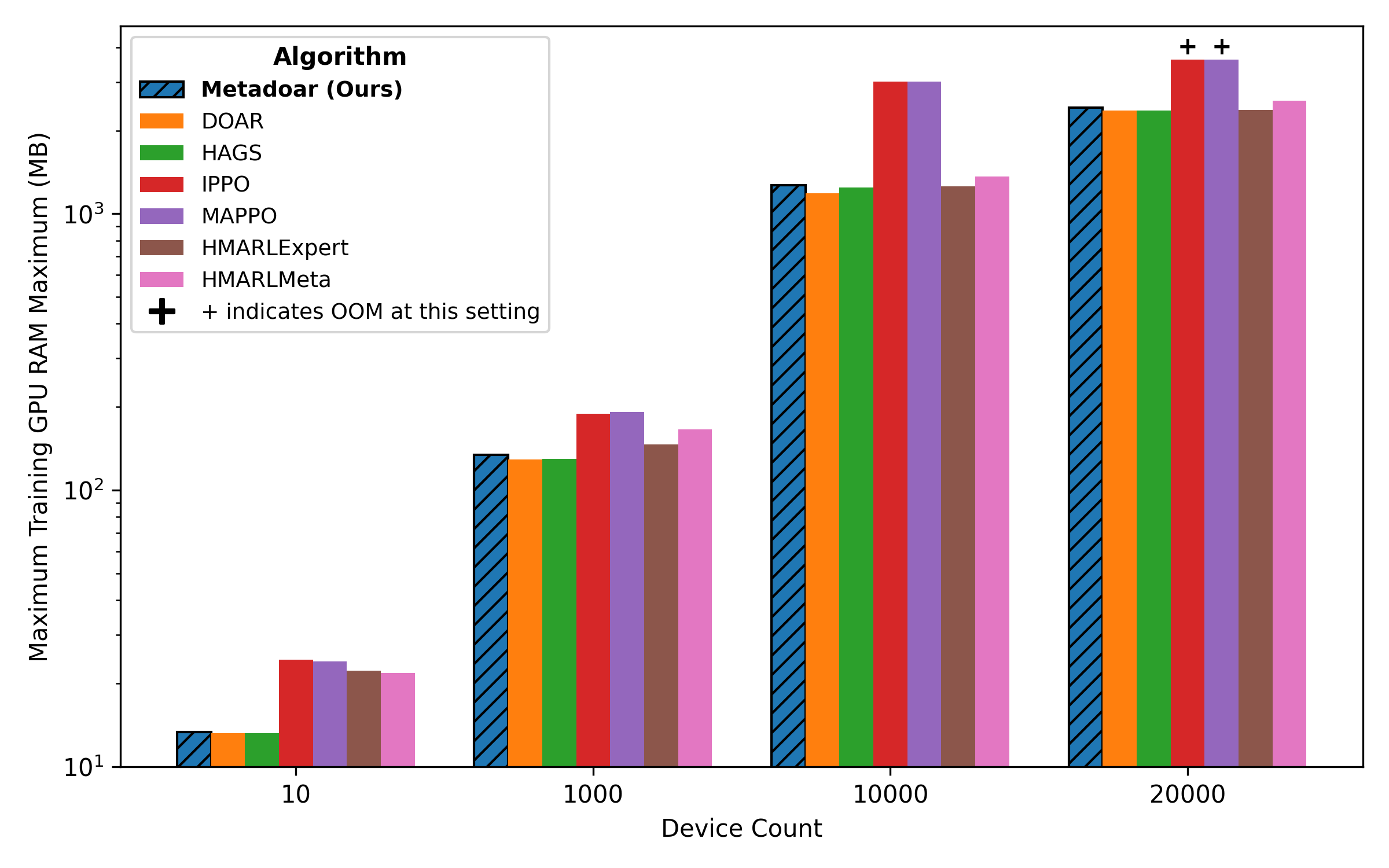}
        \caption{Max Training GPU RAM}
        \label{fig:scaling-train-cuda}
    \end{subfigure}
    \hspace{0.04\textwidth}
    \begin{subfigure}{0.38\textwidth}
        \centering
        \includegraphics[width=\linewidth]{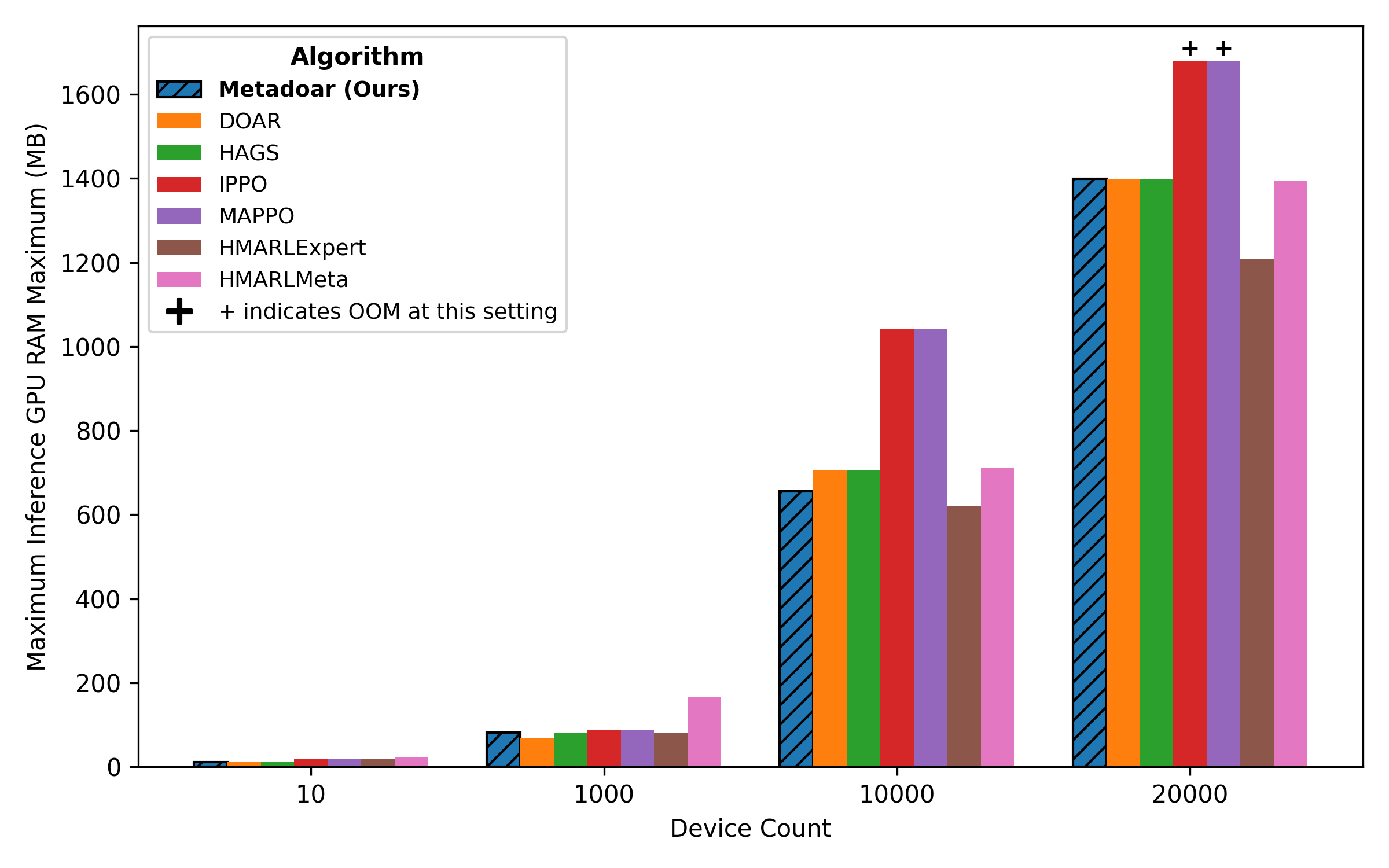}
        \caption{Max Inference GPU RAM}
        \label{fig:scaling-infer-cuda}
    \end{subfigure}

    \caption{
        Scalability of \textbf{MetaDOAR (Ours)} vs.\ baselines across 10, 100,  10000, and 20000 devices
        for forward-pass latency and peak memory usage.
        MetaDOAR (Ours) is highlighted in the legend of each plot. Smaller is better.
    }
    \label{fig:MetaDOAR-scaling}
\end{figure*}

\section{\texorpdfstring{$\alpha$}{alpha} hyperparameter ablation}

\paragraph{Effect of the pruning parameter $\alpha$.}
The meta-controller in MetaDOAR does not work with all devices at once. Instead, for a network with $M$ devices it picks
\[
k \;=\; \left\lceil \alpha \cdot \log_{10}\bigl(\max(10, M)\bigr) \right\rceil
\]
candidate devices per decision, capped by the number of visible devices. Small values of $\alpha$ make the controller highly selective, while larger values increase $k$ until almost every device is allowed through.

Table~\ref{tab:alpha_sweep} reports a sweep over $\alpha$. For the smaller settings, $\alpha = 1$ and $\alpha = 5$, defender utility is essentially unchanged. Pushing to a very large value, $\alpha = 50$, lowers the average payoff, suggesting that once the meta-controller is allowed to expose nearly all devices, we begin to lose the benefit of focusing computation on a small, high-impact subset. In that regime the behavior approaches vanilla DOAR in terms of which actions are available, but still pays the overhead of the additional machinery.

These results are consistent with the intended role of $\alpha$: moderate pruning (here, $\alpha \in \{1,5\}$) preserves performance while enforcing a strong structural prior, whereas very large $\alpha$ erodes that prior and yields slightly worse payoffs. In the main experiments we fix $\alpha = 1$, which keeps $k$ small and computation cheap while performing on par with the best setting observed in the sweep.
\begin{table}[t]
\centering
\caption{Effect of the pruning parameter $\alpha$ on defender utility.}
\label{tab:alpha_sweep}
\begin{tabular}{r c}
\hline
$\alpha$ &Utility  \\
\hline
1   & $138.68 \pm 2.93$ \\
5   & $138.86 \pm 2.75$ \\
50  & $135.11 \pm 0.01$ \\
\hline
\end{tabular}
\caption{Effect of the $\alpha$ hyperparmeter on player average utility for a 1000 device network. Higher is better.}
\end{table}

\section{\texorpdfstring{$k$}{k}-hop invalidation hyperparameter ablation}

\smallskip
\noindent\textbf{Effect of the invalidation parameter $k$.}
Table~\ref{tab:k_invalidation} varies the $k$-hop invalidation radius used by the Q-value cache on payoff matrix construction. Payoff matrix construction occurs several times in the training process and dominates the majority of the computational overhead. All three settings give broadly similar player utilities, with averages in the range $127$–$139$. The most aggressive invalidation, $k = 10$, achieves the highest utility ($138.86 \pm 3.75$), while $k = 4$ underperforms and shows the largest variance. The local setting, $k = 1$, sits in between on payoff ($135.98 \pm 0.78$) but is noticeably cheaper computationally: payoff-matrix construction is about $1.28$\,ms on average, roughly a 10–15\% reduction relative to $k = 4$, and uses about 11\,MB less RAM than the larger radii. Since the utility differences are modest compared to the change in wall time and memory footprint, we adopt $k = 1$ as the default in our main experiments; it keeps cache reuse conservative, but avoids paying for wide invalidation sweeps that do not translate into clear gains in performance.

\end{document}